\newtheorem{theorem}{Theorem}
\newtheorem{definition}{Definition}
\newtheorem{lemma}{Lemma}
\title{Decentralized MCTS via Learned Teammate Models}
\author{
Aleksander Czechowski
{\normalfont and}
Frans A. Oliehoek
\affiliations
Delft University of Technology
\emails
\{a.t.czechowski, f.a.oliehoek\}@tudelft.nl
}
\begin{document}

\maketitle

\begin{abstract}
Decentralized online planning can be an attractive paradigm for cooperative 
multi-agent systems, due to improved scalability and robustness.
A key difficulty of such approach lies in making accurate
predictions about the decisions of other agents.  
In this paper, we present a trainable online decentralized planning algorithm
based on decentralized Monte Carlo Tree Search, combined with
models of teammates learned from previous episodic runs. 
By only allowing one agent to adapt its models at a time, 
under the assumption of ideal policy approximation,
successive iterations of our method are guaranteed to improve joint
policies, and eventually lead to convergence to a Nash equilibrium.  
We test the efficiency of the algorithm by performing experiments 
in several scenarios of the spatial task
allocation environment introduced in~\cite{claes2015effective}.  We show that
deep learning and convolutional neural networks can be employed
to produce accurate policy approximators which exploit the spatial features of the
problem, and that the proposed algorithm improves over the baseline
planning performance for particularly challenging domain configurations.
\end{abstract}

\section{Introduction}
The ability to compute or learn plans to realize complex tasks is a central question in artificial intelligence.
In the case of multi-agent systems, the coordination problem is of utmost importance:
how can teams of artificial agents be engineered to work together, to achieve a common goal?
A decentralized approach to this problem has been adopted in many techniques~\cite{durfee2013multiagent}.
The motivation comes from human collaboration: in most contexts we plan individually, 
and in parallel with other humans.
Moreover, decentralized planning method can lead to a number of benefits, 
such as robustness, reduced computational load and 
absence of communication overhead~\cite{claes2017decentralised}. 

Decentralized planning methods were applied in context of
multiplayer computer games~\cite{jaderberg2019human}, robot soccer~\cite{acsik2012solving}, 
intersection control~\cite{vu2018decentralised} 
and autonomous warehouse control~\cite{claes2017decentralised}, to name a few.
The essential difficulty of this paradigm lies in solving the coordination problem.
To naively deploy single-agent algorithms for individual agents inevitably leads to the tragedy
of the commons; i.e. a situation where 
an action that seems optimal from an individual perspective, is suboptimal collectively.
For instance, consider a relatively simplistic instance of a spatial task allocation problem
in which a team of $n$ robotic vacuum cleaners needs to clean a factory floor, as in Figure~\ref{facfloor}.
\begin{figure}[t]
  \centering
  \includegraphics[width=60mm]{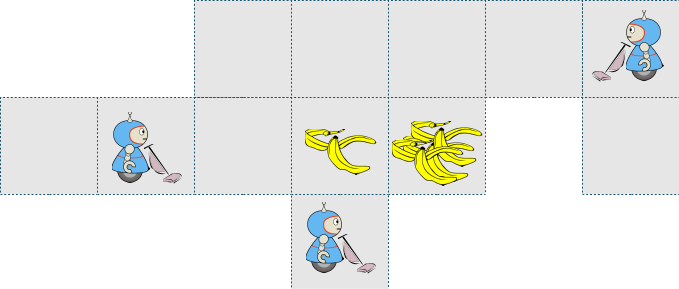}
  \caption{Robots cleaning a factory floor.}\label{facfloor}
\end{figure}
Assuming that a robot solves its own traveling salesman problem~\cite{lin1965computer} 
would result in optimal path planning if it was alone in the factory;
but collectively it could lead to unnecessary duplication of resources
with multiple robots heading to the same littered area.
On the other hand, joint optimization of all actions results in an intractable problem,
that is not scalable to large networks of agents.
Among some of the heuristic methods to deal with such problems proposed by researchers,
communication~\cite{wu2009multi}, higher level coordination orchestration~\cite{borrajo2019efficient},
and co-agent modelling~\cite{albrecht2018autonomous}
were previously explored in literature.

The cooperative decentralized planning problem can be posed in different settings;
within this paper we focus on \emph{simulation-based planning}, 
where each agent in the team has access to a simulator of the environment,
which they can use to sample states and rewards, and 
evaluate the value of available actions, before committing to a particular one.
The inherent difficulty of decentralized simulation-based planning
is that in order for an individual agent to sample from the simulator
and estimate the potential future rewards,
it needs to provide \emph{joint actions} of themselves and their teammates.
However, in a live planning scenario, where each of the agents chooses actions
according to their own simulation-based algorithm, 
it is not possible to know \emph{a priori} what actions teammates actually execute.

There are two basic approaches to deal with this. 
If all agents are deployed with the same algorithm, they can 
evaluate all joint actions and choose their respective individual action;
however this approach is costly, and the computational difficulty 
grows exponentially with the number of agents.
A different approach is to make assumptions on other agents,
and supply the simulator with an educated guess on their actions, given the common observed state.
Such solution was used in~\cite{claes2017decentralised}, where heuristic policies were designed
for a domain modelling the task allocation problem in a factory floor.

In this paper, we build upon the second paradigm.
We introduce a decentralized planning method of \emph{Alternate maximization with Behavioural Cloning} (ABC).
Our algorithm combines the ideas of alternate maximization,
behavioral cloning and Monte Carlo Tree Search (MCTS) in a previously unexplored manner.
By the ABC method, the agents \emph{learn} the behavior of their teammates, 
and adapt to it in an iterative manner.
The high-level overview of our planning-execution loop for a team of agents in a given environment
can be represented in the following alternating steps:
\begin{enumerate}
  \item We perform a number of episode simulations 
    with agents acting according to their individual MCTS; 
    each agent has an own simulator of the environment and models of its teammates;
\item 
    The data from the simulations (in the form of state-action pairs for all agents) 
    is used to train new agent behaviour models;
    these are in turn inserted into the MCTS simulator of one of the agents.
\end{enumerate}
We refer to each successive iteration of above two steps as a \emph{generation}.
In each generation we chose a different agent for the simulator update.

We prove that if original policies have been perfectly replicated by learning, 
we are guaranteed to increase the mean total reward
at each step, and eventually converge to a Nash equilibrium.
We also demonstrate the empirical value of our method by 
experimental evaluation in the previously mentioned factory floor domain.

\section{Related Work}
In this paper, we take a so-called \emph{subjective
perspective}~\cite{Oliehoek16Book} of the multi-agent scenario, in which the system is
modeled from a protagonist agent's view. The simplest approach simply ignores the other
agents completely (some-times called `self-absorbed' \cite{claes2015effective}). On the
complex end of the spectrum, there are `intentional model' approaches that recursively model the other agents,
such as the recursive modeling method \cite{Gmytrasiewicz95ICMAS}, and interactive POMDPs
\cite{gmytrasiewicz2005framework,doshi2009monte,eck2019scalable}. In between lie other,
often simpler, forms of modeling other agents \cite{HernandezLeal17arxiv,albrecht2018autonomous,Hernandez19AIIDE}. Such models can be tables, heuristics, finite-state machines, neural networks, other machine learning
models, etc. Given that these do not explicitly model other agents, they have been called
`sub-intentional', however, as demonstrated by \cite{Rabinowitz18ICML} they can
demonstrate complex characteristics associated with the `theory of mind' \cite{Premack78ToM}.

In our approach we build on the idea that (sub-intentional) neural network models can
indeed provide accurate models of behaviors of teammates. 
Contrary to~\cite{claes2017decentralised}, who couple MCTS with heuristics to predict the
teammates, this makes our method domain independent as
well as providing certain guarantees (assuming ideal policy replication by function approximators). Recursive
methods, such as the interactive particle filter \cite{doshi2009monte} also give certain
guarantees, but are typically based on a finite amount of recursion, which means that at
the lowest level they make similar heuristic assumptions. Another drawback is their very
high computational cost, making them very difficult to apply on-line.

In~\cite{kurzer2018decentralized} decentralized MCTS is combined with macro-actions
for automated vehicle trajectory planning.
The authors however assume heuristic (so not always accurate) 
models of other agents and do not learn their actual policies.
\cite{best2016decentralised} uses parallel MCTS
for an active perception task and combines it with communication to solve the coordination problem;
\cite{li2019integrating} explores similar ideas and combines communication with heuristic teammate models
of~\cite{claes2017decentralised}.
Contrary to both of these papers, we assume no communication during the execution phase.
Similarly, \cite{golpayegani2015collaborative} uses MCTS in a parallel, decentralized way,
but includes a so-called Collaborative Stage at each decision making point, where agents
can jointly agree on the final decision. 
Other research has tried to make joint-action MCTS more scalable by exploiting factored state spaces~\cite{amato2015scalable}.
In our setting,
we do not make any assumptions about the dynamics of the environment.

Finally, we would like to point out similarity of our approach with
AlphaGo \& AlphaZero -- the computer programs
designed to master the game of Go in a highly acclaimed research by Deepmind~\cite{silver2016mastering,Silver17nature}.
There, neural network models were used together with self-play to guide MCTS,
by providing guesses of opponents gameplay and estimates on state-action value functions.
However, both AlphaGo \& AlphaZero expand opponents' actions in the search tree.
By our approach, we are able to incorporate the actions of other agents in the environment simulator
so they do not contribute to the branching factor of the decision tree,
which, in turn, allows us to scale the method to several agents.

\section{Background}

A Markov Decision Process (MDP)
is defined as a 5-tuple $M:=(S,A,T,R,\gamma)$, where $S$ is a finite set of states,
$A$ is a finite set of actions,
$T: S\times A \times S \to [0,1]$ are probabilities of transitioning between states for particular choices of actions, 
$R: S\times A \to \mathbb{R}$ is a reward function 
and $\gamma \in [0,1]$ is the discount factor.
The \emph{policy} is a mapping $\pi: S \to A$, 
which represents an action selection rule for the agent.
The policy is paired with the environment to form a Markov chain over the state space
defined by the sequence of probability distribution functions
which models the process of decision making.
The value of a state is given by
\begin{equation}
  V^{M,\pi}(s_0) = \sum_{t} \gamma^t \mathbb{E}(R_t | \pi, s_0)
\end{equation}
One is typically interested in finding the \emph{optimal policy},
$\pi^{*,M} := \operatorname{argmax}_{\pi} V^{M,\pi}$.
The value of action $a$ in a given state $s$ is given by the $Q$ function
\begin{equation}
  Q^{M,\pi}(s,a):=  R(s,a) + \gamma\sum_{s'}T(s'|s,a) V^{M,\pi^{*}} (s').\\
\end{equation}
By the Bellman optimality principle, the actions with highest Q values form the optimal policy
$\pi^{*}(s,a) = \operatorname{argmax}_{a \in A} Q(s,a)$.

In a Multi-agent Markov Decision Process (MMDP) with $n$ agents the action space is factored in $n$ components: $A=A_1 \times \dots \times A_n$.
Each component $A_{i}$ describes the individual actions available to the agent $i$ and
the policies can be represented as products of individual agent policies $\pi=(\pi_1, \dots, \pi_n)$.
We emphasize that an MMDP is fully cooperative and all agents receive the same reward signal $R$
(contrary to e.g. stochastic games, where the rewards are individual).
For our considerations, it will be useful to introduce the $i$-th self-absorbed projection
of an MMDP $M=(S,A,T,R,\gamma)$, after having fixed all individual policies besides $\pi_i$, as a single agent MDP:
\begin{equation}
  \Pi_{i}(M, \pi_{-i}) := (S,A_i,T_i,R,\gamma)\\
\end{equation}
where $\pi_{-i}$ denotes (fixed) policies of all agents except for agent $i$ and the 
transitions and the rewards are induced by compositions of $T,R$ with $\pi_{-i}$.

The problem of finding solutions to an MMDP can be viewed as a collaborative normal form game~\cite{claus1998dynamics,peshkin2000learning}, 
where the agents are players, the individual policies are strategies,
and the payoffs for a joint policy $\pi$ and an initial state $s_0$ 
are given by $V^{M,\pi}(s_0)$, and uniform to all players.
A joint policy is a \emph{Nash equilibrium} if and only if no higher payoff can be achieved by changing 
only one of the individual policies forming it.
The optimal policy\footnote{For ease of exposition of the theoretical background, we assume it to be unique; 
this is not a restriction on the class of MDPs as one can always
perturb the reward function in an otherwise negligible fashion to make policy values disjoint.} $\pi^{*,M}$
is a Nash equilibrium,
however there may be multiple other (suboptimal) Nash equilibria in the system.

The question of finding an optimal joint policy can be considered in the simulation-based planning context.
There, it is no longer assumed that we have access to the full probabilistic model of the domain.
Instead, one is supplied with a \emph{system simulator}, i.e. a method for sampling new
states $s'$ and rewards $r$ based on states $s$ and joint actions $a$, according to the underlying
(but otherwise possibly unknown) probability distribution $T$ and reward function $R$.
In this paper, we consider the setting of online, decentralized, simulation-based planning,
where the agents need to compute individual best responses $\pi_i^{*}(s)$
over states $s \in S$ they encounter in the episode. 

We focus 
on one particularly effective and popular planning method: the MCTS algorithm
combined with the Upper Confidence Trees (UCT) tree exploration policy.
This search method uses Monte Carlo simulations
to construct a tree of possible future evolutions of the system.
The tree consists of nodes representing actions taken by the agent, and 
the resulting, sampled states encountered in the environment.
Each node stores statistics that approximate either the state values or the $Q$ values of actions.
The single iteration of algorithm execution is split into four parts.
First, the tree is traversed according to the tree policy (selection).
Then, new nodes are created by sampling an action and the resulting state (expansion).
Next, a heuristic policy is used to complete the episode simulation (rollout).
Finally, the results are stored in the visited tree nodes (backpropagation).

The selection step is performed by choosing
a node $k$ with an action $a$, which maximizes the formula
 $ \tilde{Q}(s,a,t)+c\sqrt{\frac{\log N_k}{n_k}}$,
where $\tilde{Q}$ is a sample-based estimator of the $Q$ value, $N_k$ is the 
amount of visits at the parent node of node $k$, and $n_k$ is the amount of visits to the node.
All of these three values are updated at each backpropagation step.
The constant $c>0$ is the exploration constant;
in theory, for rewards in $[0,1]$ range, it should be equal to $\sqrt{2}$.
In practice the constant is chosen empirically~\cite{kocsis2006bandit}.

The algorithm is initialized and performed at each time step of simulation,
for either a predefined, or time-limited amount of iterations.
Then, the best action is selected greedily, based on 
the approximate $Q$-values of child nodes of the root node.

\begin{definition}
We denote the policy generated by 
action selection according to the MCTS algorithm with UCT in an MDP $M$
by $MCTS(M)(=MCTS(M,C,l,\rho))$, with $C>0$ being the exploration constant, $l \in \mathbb{N}$ 
the number of UCT iterations and $\rho$ -- a rollout policy.
\end{definition}

For a sufficiently large number of iterations $l=l(C,M)$
the MCTS algorithm approximates the real $Q$-values of each action node with arbitrary accuracy;
and therefore it constitutes the pure, optimal policy: $MCTS(M,C,l,\rho)=\pi^{*,M}$, c.f.~\cite{chang2005adaptive}.

\section{Alternating Maximization with Behavioral Cloning}

In this section we describe the ABC algorithm and prove its 
convergence guarantees.

\subsection{The Hill Climb}


A common method for joint policy improvement in multi-agent decision making
is the so-called \emph{hill climb}, where agents alternate between 
improving their policies (c.f.~\cite{nair2003taming}).
At each iteration
of the method (i.e. generation), 
one of the agents is designated to compute its best response,
while the other agents keep their policies fixed. 
The hill climb method comes with performance guarantees, 
in particular the joint rewards are guaranteed to (weakly) increase in subsequent generations.

Consider an MMDP on $n$ agents $M=(S,A,T,R)$,
and let $(\pi_1, \dots, \pi_n)$ denote the individual components of a joint policy $\pi$.

\begin{definition}
For each $i \in \{1,\dots,n\}$ we define the $i$-th best response operator
$BR_i$ from the joint policy space to itself by:
\begin{equation}
  BR_{i}\left(\pi \right) := \left(\pi_1,\dots,\pi_{i-1}, \pi^{*, \Pi_i(M, \pi_{-i})}_{i}, \pi_{i+1}, \dots, \pi_n \right).
\end{equation}
\end{definition}

\begin{lemma}\label{lem1}
  The following inequality holds:
  \begin{equation}
  V^{M, BR_{i}(\pi)}(s) \geq V^{M, \pi}(s),\ \forall s,i.
  \end{equation}
  Moreover, $V^{M, BR_{i}(\pi)}(s) =  V^{M, \pi}(s) \ \forall s$ implies that
  $\pi$ is a fixed point of $BR_{i}$.
\end{lemma}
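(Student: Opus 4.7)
The proof plan is to reduce the inequality to the elementary fact that the optimal policy of a single-agent MDP dominates any other policy on that MDP, applied to the self-absorbed projection $\Pi_i(M,\pi_{-i})$.

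First, I would establish the bridging identity that for any joint policy $\pi'$ whose components other than the $i$-th coincide with $\pi_{-i}$,
\begin{equation}
V^{M,\pi'}(s) \;=\; V^{\Pi_i(M,\pi_{-i}),\,\pi'_i}(s)\quad \forall s.
\end{equation}
This is essentially by construction of $\Pi_i(M,\pi_{-i})$: fixing $\pi_{-i}$ marginalizes the other agents' actions out of $T$ and $R$, producing a single-agent MDP in which running $\pi'_i$ yields the same induced Markov chain on $S$ (and the same per-step rewards) as running $\pi'$ on $M$. A one-line derivation from the definition of the projection and linearity of expectation suffices; no induction is needed beyond the standard Bellman unrolling.

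Applying this identity to $\pi' = \pi$ and to $\pi' = BR_i(\pi)$ (both of which agree with $\pi$ on the non-$i$ components), the claim reduces to
\begin{equation}
V^{\Pi_i(M,\pi_{-i}),\,\pi^{*,\Pi_i(M,\pi_{-i})}_i}(s) \;\geq\; V^{\Pi_i(M,\pi_{-i}),\,\pi_i}(s),
\end{equation}
which is just the defining property of the optimal policy in the single-agent MDP $\Pi_i(M,\pi_{-i})$. This gives the first inequality for every $s$ and every $i$.

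For the fixed-point claim, suppose equality holds for all $s$. Then $\pi_i$ attains the optimal value function on $\Pi_i(M,\pi_{-i})$ at every state, so $\pi_i$ is itself an optimal policy of that projection. By the uniqueness assumption on optimal policies stated in the footnote (applied to $\Pi_i(M,\pi_{-i})$, which is a genuine MDP), we conclude $\pi_i = \pi^{*,\Pi_i(M,\pi_{-i})}_i$, hence $BR_i(\pi)=\pi$. I do not foresee a real obstacle here; the only subtlety is making the bridging identity precise and being explicit that the uniqueness assumption transfers to the projected MDP, so I would spend most of the writing budget on stating the identity cleanly and invoking optimality, rather than on any substantive calculation.
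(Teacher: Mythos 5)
Your proposal is correct and follows essentially the same route as the paper: the paper's proof is exactly the chain $V^{M,BR_i(\pi)}(s)=V^{\Pi_i(M,\pi_{-i}),(BR_i(\pi))_i}(s)=V^{\Pi_i(M,\pi_{-i}),\pi_i^{*,\Pi_i(M,\pi_{-i})}}(s)\geq V^{\Pi_i(M,\pi_{-i}),\pi_i}(s)=V^{M,\pi}(s)$, i.e.\ your bridging identity applied at both ends plus optimality in the projected MDP, and the fixed-point claim is likewise concluded from $\pi_i=\pi_i^{*,\Pi_i(M,\pi_{-i})}$. Your only addition is making explicit the appeal to the uniqueness footnote in the equality case, which the paper leaves implicit.
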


\begin{proof}
For all $s \in S$: 
\begin{equation}
  \begin{aligned}
  V^{M, BR_{i}(\pi)}(s) &= V^{\Pi_i(M,\pi_{-i}), (BR_{i}(\pi))_i}(s)\\
  &=V^{\Pi_i(M,\pi_{-i}), \pi_i^{*, \Pi_i(M, \pi_{-i})}}(s)\\
  &\geq V^{\Pi_i(M,\pi_{-i}), \pi_i}(s)\\
  &=V^{M, \pi}(s).
\end{aligned}
\end{equation}
If the above are equal for all $s \in S$, then $\pi_i = \pi_i^{*, \Pi_i(M,\pi_{-i})}$.
\end{proof}

Applications of Lemma~\ref{lem1} to simulation-based planning 
can seem counter-intuitive, and very much in spirit of the aphorism 
\emph{all models are wrong, but some are useful}:

\newtheorem{remark}{Remark}

\begin{remark}
  Consider the following composition $BR_i( BR_j (\pi) )$ with  $i \neq j$ for some 
  joint policy $\pi$; the interpretation is that agent $j$ first adapts to the policies $\pi_{-j}$, 
  including the $i$-th agent's policy $\pi_i$; then agent $i$ adapts to the policies $(BR_j(\pi))_{-i}$.
      The subsequent application of $BR_i$ on $BR_j (\pi)$ is likely to update agent $i$'s policy, 
      which means that the assumption agent~$j$ made to compute its best-response (namely that $i$ uses $\pi_i$) is no longer true.
    Nevertheless, as shown by Lemma~\ref{lem1}, the value of the joint policy still increases.
\end{remark}

\newtheorem{cor}{Corollary}

\begin{definition}
  Let $\sigma$ be a permutation on the set $\{1,\dots,n\}$. We define the joint response operator by
  $JR_{\sigma}:= BR_{\sigma(n)} \circ \dots \circ BR_{\sigma(1)}$.
\end{definition}

\begin{cor}\label{cor1}
  For all permutations $\sigma$ and all initial joint policies the 
  iterative application of operator $JR_{\sigma}$ converges to a Nash equilibrium.
  Since the policy space is finite, the convergence is achieved in finite time.
\end{cor}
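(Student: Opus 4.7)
The plan is to combine the pointwise monotonicity supplied by Lemma~\ref{lem1} with the finiteness of the joint policy space. Writing $\pi^{(k)} := (JR_\sigma)^k(\pi)$ for the orbit of an arbitrary initial joint policy $\pi$ under $JR_\sigma$, the corollary reduces to showing this orbit enters a Nash equilibrium in finitely many steps. Because $JR_\sigma$ is the composition of the operators $BR_{\sigma(j)}$, each of which weakly increases $V^{M,\cdot}(s)$ at every state $s$, the sequence $V^{M,\pi^{(k)}}(s)$ is non-decreasing in $k$ for every fixed $s$.

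Next I would exploit finiteness. Since $S$ and all $A_i$ are finite, the joint policy space is finite, and so is the collection of attainable value functions $\{V^{M,\pi}\}_\pi$. A pointwise non-decreasing sequence taking values in a finite set must stabilize; taking a common stabilization index for the finitely many states $s \in S$ produces some $K$ with $V^{M,\pi^{(K+1)}}(s) = V^{M,\pi^{(K)}}(s)$ for every $s$. To push this stabilization down to the individual best-response steps, I would unfold the composition by setting $\psi_0 := \pi^{(K)}$ and $\psi_j := BR_{\sigma(j)}(\psi_{j-1})$, so that $\psi_n = \pi^{(K+1)}$ and Lemma~\ref{lem1} yields the chain
\begin{equation}
V^{M,\psi_0}(s) \leq V^{M,\psi_1}(s) \leq \cdots \leq V^{M,\psi_n}(s)
\end{equation}
whose endpoints coincide. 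Non-negativity of the individual increments forces equality at every link, so the second clause of Lemma~\ref{lem1} applies and yields that each $\psi_{j-1}$ is a fixed point of $BR_{\sigma(j)}$, i.e.\ $\psi_j = \psi_{j-1}$ throughout. In particular $\pi^{(K)}$ is a fixed point of every $BR_i$, which is precisely the condition that no agent can unilaterally improve the joint value, and hence $\pi^{(K)}$ is a Nash equilibrium. Because $\pi^{(K+1)} = \pi^{(K)}$, the iteration is constant from generation $K$ on, giving convergence in finite time.

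The only step where the argument could go subtly astray is the transfer of value-stability from the full composition $JR_\sigma$ down to each constituent $BR_{\sigma(j)}$. This is why I would insist on carrying the pointwise (all-states) version of the inequality throughout rather than tracking only a scalar initial-state payoff: if one worked with a single $s_0$, the telescoping equality case would not suffice to invoke the fixed-point clause of Lemma~\ref{lem1}, which is stated for equality at all states. With the pointwise formulation, however, this is immediate from non-negativity of the individual increments, and the rest of the argument is purely bookkeeping on the finite policy space.
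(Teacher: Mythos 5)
Your proof is correct and follows essentially the same route as the paper's: both establish stabilization of the (pointwise) value sequence via monotonicity and finiteness, then unfold the composition $JR_\sigma$ into its constituent best-response steps and use the equality clause of Lemma~\ref{lem1} on each link to conclude the limit policy is a fixed point of every $BR_i$. Your telescoping chain is just a more explicit rendering of the paper's ``inductive argument,'' and your remark about needing the all-states version of the inequality (rather than a single $s_0$) is exactly the right point of care.
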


\begin{proof}
  To make the argument easier to follow,
  we will assume that $\sigma=\operatorname{id}$, and denote $JR_{\operatorname{id}}$
  as $JR$. For the purpose of this proof we denote the $N$-th composition of $JR$
  by $JR^N$, for any $N \in \mathbb{N}$.
  Since $V^{M,JR(\cdot)}(s)$ is non-decreasing as a function of joint policies, and the policy set is finite,
  for any joint policy $\pi$ there exists an $N \in \mathbb{N}$ such that 
  $V^{M,JR^{(N+1)}(\pi)}(s)=V^{M,JR^N(\pi)}(s)\ \forall s$.
  We will show that $\pi^N:=JR^N(\pi)$ is a Nash equilibrium.
  Since $V$ increases along trajectories generated by $BR_i$, we have 
  \begin{equation}
    V^{M,JR^N(\pi)}(s)=V^{M, BR_1(JR^N(\pi))}(s)\ \forall s.
  \end{equation}
  Agent 1 has no incentive to deviate (c.f. the second part of Lemma~\ref{lem1}), so
  \begin{equation}
    JR^N(\pi)=BR_1(JR^N(\pi)).
  \end{equation}
  By an inductive argument
  \begin{equation}
    BR_i(JR^N(\pi))=JR^N(\pi).
  \end{equation}
  for all $i \in \{1,\dots ,n\}$,
  which concludes the proof.
\end{proof}

\subsection{Behavioral Cloning}
In an online planning setting, accessing the policies of other agents
can be computationally expensive,
especially if the policies of individual agents are formed by executing an algorithm which evaluates
the Q values ``on-the-go'' -- such as in MCTS. 
To address this issue, we
propose to use machine learning models.

More precisely, we divide our policy improvement process into generations;
at each generation we update models in the simulator of one of our agents 
with samples of other agents' policies from the previous generations.
Through machine learning we are able to extrapolate and
to give predictions of actions for states that were unseen during the previous simulation runs
(i.e. were never explored by the policies of previous generations).
By our method, each agent $i$ uses MCTS to
conduct its own individual planning in the environment $\Pi_i(M,\pi^h_{-i})$,
where $ \pi^h_{-i} $ is a model of policies of other agents.
Therefore, the planning is fully decentralized, and no communication is needed to execute the policies.

\subsection{The ABC Pipeline}
Our policy improvement pipeline based on MCTS 
is presented in pseudocode in Algorithm~\ref{alg2}.
The behavioral cloning algorithm is presented in pseudocode in Algorithm~\ref{alg3}.

\begin{algorithm}[tb]
    \caption{The ABC policy improvement pipeline.}\label{alg2}
    \textbf{Inputs:}
    MMDP $M$ \\
    initial heuristic policies $\pi^{h,0}=(\pi^{h,0}_1,\dots,\pi^{h,0}_n)$ \\
    \textbf{Parameters:} number of generations $nGen$\\ MCTS parameters $l$, $C$ \\
    \mbox{\textbf{Outputs:} improved policies $\pi^{nGen}$}
    \textbf{Start:}
    \begin{algorithmic}[1]
   \STATE $\forall i: \pi^0_i := MCTS \left(\Pi_1(M,\pi^{h,0}_{-i}),C,l,\pi_i^{h,0} \right)$ 
    \FOR{g in 1:nGen}
    \STATE    perform simulation with $\pi^{g-1}$, collect data $d$     
    \STATE    $\forall i:$ train new teammate models $\pi_i^{h,g-1} \approx \pi_i^{g-1}$\\
    (by Algorithm~\ref{alg3} with data $d$)
    \STATE $\forall i:~~~  \pi^g_i := \pi^{g-1}_{i}$   \hfill//copy previous policies      
    \STATE    $j := (g \operatorname{mod} n)+1 $       \hfill//agent to update  
     \STATE   $\pi^g_{j} := \operatorname{MCTS}
        \left(
            \Pi_{j}
            \left(M,\pi^{h,g-1}_{-j}\right),C,l, \pi_j^{h,g-1}
        \right)
        $ 
        \ENDFOR
      \end{algorithmic}
\end{algorithm}

\begin{algorithm}[tb]
   \caption{The algorithm for training policy approximators.}\label{alg3}
    \textbf{Inputs:}
  data $d = (s^g_i,a^g_i)_{i,g}$ of state-action pairs indexed by agents $i$, in generation $g$\\ 
  neural network policy models $\pi^{h,g}_i(\theta_0,\cdot):\ S \to [0,1]^{A_i}$, with softmax outputs over the action space\\
  \mbox{\textbf{Outputs:} trained policy models $\pi^{h,g}_i$}
  \textbf{Start:}\begin{algorithmic}[1]
  \FOR{i in 1:nAgents}
   \STATE Convert states $s^g_i$ to arrays
   \STATE One-hot-encode actions $a^g_i$
   \STATE Initialize neural network policy approximators with weights $\theta:=\theta_0$
   \FOR{e in 1:nTrainingEpochs}
    \STATE draw batch $B_g$ 
    \STATE minimize $- \sum_{ (s^g_i,a^g_i) \in B_g } a^g_i \log\left(\pi^{h,g}_i(\theta,s)\right)$ over $\theta$ (cross-entropy)
   \ENDFOR
   \ENDFOR
 \end{algorithmic}
\end{algorithm}

Since MCTS with UCT converges to actual Q values, we can conclude that for sufficient number of UCT iterations
our algorithm indeed executes the best responses to the assumed policies:

\begin{cor}
Let $\pi^g$ be as in Algorithm~\ref{alg2}.
For $l$ large enough 
$\pi^g_i=\pi^{*, \Pi_i\left(M,\pi^{h,g-1}_{-i}\right)}$, and as consequence $\pi^g=BR_i(\pi^{h,g-1})$.
\end{cor}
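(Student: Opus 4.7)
The plan is to glue together three ingredients that are already in hand: the MCTS--UCT convergence property cited just after the definition of $MCTS(M,C,l,\rho)$, the construction of $\pi^g$ in Algorithm~\ref{alg2}, and the definition of the best-response operator $BR_i$. The argument is essentially bookkeeping, but I want to be careful to match all $n$ coordinates of the joint policy, not just the one belonging to the agent being updated.

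First, I would fix notation: the agent updated in generation $g$ is $j := (g \bmod n)+1$, which is what the statement calls $i$. Line 7 of Algorithm~\ref{alg2} sets
\[
\pi^g_i \;:=\; MCTS\!\left(\Pi_i(M,\pi^{h,g-1}_{-i}),\,C,\,l,\,\pi_i^{h,g-1}\right).
\]
Invoking the convergence fact that $MCTS(M',C,l,\rho)=\pi^{*,M'}$ whenever $l\ge l(C,M')$, and applying it to the self-absorbed projection $M' = \Pi_i(M,\pi^{h,g-1}_{-i})$, gives the first claim
\[
\pi^g_i \;=\; \pi^{*,\,\Pi_i(M,\pi^{h,g-1}_{-i})}.
\]

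For the ``consequence'' I would verify the identity $\pi^g = BR_i(\pi^{h,g-1})$ coordinate by coordinate. The $i$-th coordinate of $BR_i(\pi^{h,g-1})$ is by definition $\pi^{*,\Pi_i(M,\pi^{h,g-1}_{-i})}$, which matches the display above. For any other coordinate $k\neq i$, line 5 of Algorithm~\ref{alg2} gives $\pi^g_k = \pi^{g-1}_k$, while the definition of $BR_i$ leaves the $k$-th coordinate of its argument untouched, namely $(BR_i(\pi^{h,g-1}))_k = \pi^{h,g-1}_k$. Under the standing \emph{ideal policy approximation} assumption of this section (the approximator in line 4 is exact, so $\pi^{h,g-1}_k = \pi^{g-1}_k$), these two agree.

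The one place where care is needed -- and the only real obstacle -- is this last step: the equality $\pi^g = BR_i(\pi^{h,g-1})$ does \emph{not} follow purely from line 7 of the algorithm, because the untouched coordinates of $\pi^g$ are the previous \emph{true} policies $\pi^{g-1}_k$ while $BR_i$ is being applied to the \emph{cloned} policies $\pi^{h,g-1}_k$. I would therefore state the ideal-cloning hypothesis explicitly at the start of the proof (as is standard in the rest of the theoretical development) and observe that it collapses this discrepancy. Once that is in place, the coordinate-by-coordinate check closes the argument.
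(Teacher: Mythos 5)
Your proof is correct and follows the same route the paper implicitly takes: the corollary is stated there without a formal proof, justified only by the preceding sentence invoking the convergence of MCTS with UCT to the optimal policy (the cited result of Chang et al.), applied to the self-absorbed projection $\Pi_i\left(M,\pi^{h,g-1}_{-i}\right)$ appearing in line 7 of Algorithm~\ref{alg2} --- exactly your first step, including the identification of $i$ with the updated agent $j=(g \bmod n)+1$. Your coordinate-by-coordinate verification of the clause $\pi^g = BR_i(\pi^{h,g-1})$ goes beyond what the paper writes down, and the subtlety you isolate is real: the untouched coordinates of $\pi^g$ are the previous true policies $\pi^{g-1}_k$, whereas $BR_i(\pi^{h,g-1})$ carries the cloned policies $\pi^{h,g-1}_k$, so the identity as stated needs the ideal-approximation hypothesis $\pi^{h,g-1}=\pi^{g-1}$, which the paper only makes explicit in the Theorem that follows the corollary. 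Flagging that hypothesis at the start of the proof, as you do, is the right call and fills a gap the paper leaves implicit.
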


If our machine learning model has enough data and degrees of freedom to perfectly replicate policies,
from Lemma~\ref{lem1} and Corollary~\ref{cor1} we conclude that the procedure 
improves joint policies and eventually converges to a Nash equilibrium:

\begin{theorem}
  For $l$ large enough
  and under assumption $\pi^{h,g}=\pi^g,\ \forall g$,
  the joint policy value $V^{\pi^g,M}(s_0)$ is non-decreasing as a function of $g$, 
  and strongly increasing until it reaches the Nash equilbrium. For 
  $N$ large enough, 
  $\pi^N$ is a Nash equilibrium.
\end{theorem}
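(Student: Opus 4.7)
The plan is to reduce this theorem to Lemma~\ref{lem1} and Corollary~\ref{cor1} by identifying each generation of Algorithm~\ref{alg2} with a best-response operator. First, I would invoke the preceding corollary: for $l$ sufficiently large, the MCTS call on line~7 returns exactly the optimal policy of the projection $\Pi_j\bigl(M,\pi^{h,g-1}_{-j}\bigr)$, where $j=(g\bmod n)+1$. Combining this with the perfect-cloning assumption $\pi^{h,g-1}=\pi^{g-1}$ and the unchanged copies $\pi^g_i=\pi^{g-1}_i$ for $i\neq j$ produced by line~5, I would conclude that $\pi^g=BR_j(\pi^{g-1})$ for every generation $g\geq 1$.

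Given this identification, Lemma~\ref{lem1} immediately yields $V^{M,\pi^g}(s)\geq V^{M,\pi^{g-1}}(s)$ for all $s$ and $g$, establishing the non-decreasing property. For the strict-increase clause I would argue contrapositively: if throughout a full block of $n$ consecutive generations the value is stationary at $s_0$, then by the second clause of Lemma~\ref{lem1} the joint policy is a fixed point of $BR_i$ for every $i\in\{1,\dots,n\}$, hence a Nash equilibrium. So prior to reaching a Nash equilibrium, each cycle of $n$ generations must produce a strict value gain.

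Next, I would observe that a full block of $n$ successive generations composes to exactly one application of the joint-response operator $JR_\sigma$, where $\sigma$ is the cyclic permutation dictated by the update schedule $j=(g\bmod n)+1$. Corollary~\ref{cor1} then guarantees that the $JR_\sigma$-iteration reaches a Nash equilibrium in finitely many steps; taking $N$ to be $n$ times that number of cycles furnishes the required bound and completes the argument.

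The main obstacle I anticipate is essentially notational rather than substantive: Corollary~\ref{cor1} is stated for the compound operator $JR_\sigma$ acting on joint policies, whereas Algorithm~\ref{alg2} advances one coordinate at a time, so care is needed to align the cyclic indexing so that the $n$-step block starting at generation $g$ genuinely equals $JR_\sigma(\pi^{g-1})$, and to interpret the \emph{strongly increasing} phrasing at the per-cycle level, since individual generations in which the designated agent already plays a best response contribute no strict gain on their own.
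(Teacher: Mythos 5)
Your proposal is correct and follows essentially the same route as the paper, which derives the theorem directly from the preceding corollary (MCTS returning exact best responses for large $l$), Lemma~\ref{lem1}, and Corollary~\ref{cor1}; your added care in aligning the per-generation updates with blocks forming $JR_\sigma$ and in reading the strict-increase claim at the cycle level fills in details the paper leaves implicit.
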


We emphasize that it is essential that only one agent updates its assumed policies at each generation.
If two or more agents would simultaneously update their policies,
they could enter an infinite loop,
always making the wrong assumptions about each other in each generation, 
and never achieving the Nash equilibrium.

We remark that in our algorithm we also leverage a learned model of agents' own
behavior by employing it in MCTS rollout stage.

\section{Experiments}

Our work is a natural extension to~\cite{claes2015effective,claes2017decentralised},
we perform experiments on a slightly modified version of the Factory Floor domain introduced therein.
The baseline for our experiments is given by the current state of the art planning method for this domain:
individual MCTS agents 
with heuristic models of other agents~\cite{claes2017decentralised},
and it also serves as initialization (generation 0) of the ABC policy iterator.
Therefore, the goal of the experiments 
is to empirically confirm the policy improvement via ABC.
Any improvement over the 0th generation shows that we have managed to beat the baseline.

\subsection{The Factory Floor Domain}

The domain consists of a gridworld-like planar map, where each position
can be occupied by (cleaning) robots and tasks (e.g. litter).
Multiple robots and/or tasks can be in the same position.
Each robot is controlled by an agent, and at each time step an agent can perform either a
movement action $UP, DOWN, LEFT, RIGHT$, which shifts the position of the robot accordingly,
or a cleaning action $ACT$, which removes one task at the current position.
Attempted actions may succeed or not, according to predefined probabilities.
The reward collected at each time step is the number of tasks cleaned by the robots.
At the beginning of the simulation there can already be some tasks on the map,
and, as the simulation progresses, more tasks 
can appear, according to predefined probabilities.

\subsection{Initial Heuristic Models}

Below, we describe the heuristic policies $\pi^{h,0}_i$,
which are supplied
as the model for MCTS agents in generation $0$ (the baseline).
At each decision step, $i$-th heuristic agent acts according to the following recipe:
\begin{enumerate}
  \item it computes the \emph{social order} of the corresponding 
    robot among all robots sharing the same position;
    the social ordering function is predefined by the lexicographic order of unique robot identifiers.
  \item It evaluates each possible destination $\tau$ by the following formula:
    \begin{equation}
      NV(\tau, \text{robot}_i)=
    \begin{cases}
      -\infty \text{ if no tasks at }\tau,\\
      \frac{\# \text{tasks}}{\operatorname{dist}(\tau,\text{robot}_i)};
    \end{cases}    
  \end{equation}
  \item It assigns the $k$-th best destination as the target destination, 
    where $k$ is the computed social order 
    of the corresponding robot
    (e.g. if it is the only robot at a given position, then $k=1$).
    Therefore, the social order is used to prevent several agents choosing the same destination.
  \item It chooses action $ACT$ if it is already at the target destination;
    and otherwise it selects a movement action along the shortest path to the destination.
\end{enumerate}

\subsection{MCTS Settings}

We scale the exploration constant $C$ by the remaining time steps in the simulation, i.e. $c =c(t):= C*(H-t)$,
to account for the decreasing range of possible future rewards, as recommended in~\cite{kocsis2006bandit}.
As in the baseline, 
we also use sparse UCT~\cite{bjarnason2009lower} to combat the problem of a large state space; 
that means that we stop sampling child state nodes 
of a given action node from the simulator after we have sampled a given amount of times;
instead we sample the next state node from the existing child state nodes,
based on frequencies with which they occured.
In all our experiments, we set this sampling limit to $20$.
As in the baseline, the agents are awarded an additional do-it-yourself
bonus of $0.7$ in simulation, if they perform the task themselves;
this incentivizes them to act, rather than rely on their teammates.
Each agent performs 20000 iterations of UCT to choose the best action for their robot.

\subsection{The Behavioral Cloning Model}

Since the domain has spatial features,
we opted to use a convolutional neural network
as the machine learning method of choice for policy cloning.

As input we provide a 3-dimensional tensor with the width and the height equal
to the width and the height of the Factory Floor domain grid,
and with $n+2$ channels (i.e. the amount of robots plus two).
We include the current time step information in the state.
The $0$-th channel layer is a matrix filled with integers representing
the amount of tasks at a given position.
The tasks have finite execution time, 
and the current time step 
affects the optimal decision choice; 
therefore we encode the current time step by filling it in the entries
of the 1st channel.
Finally, for $i=1,\dots, n$, 
the $2+i$-th channel is encoding the position of robot $i$, by setting $1$ where the robot is positioned
and $0$ on all other fields.

Such state representation is fed into the neural network with two convolutional layers
of 2x2 convolutions followed by three fully connected layers with 64, 16 and 5 neurons respectively. 
We use the rectified linear unit activation functions between the layers,
except for the activation of the last layer, which is given by the softmax activation function.
The network has been trained using the categorical cross entropy function
as the loss function, and Adam as the optimization method~\cite{kingma2014adam}.
The action assigned to the state during MCTS simulations
is corresponding to the argmax coordinate of the softmax probabilities.
The time required to train the neural network is insignificant,
compared to the time needed to collect data from MCTS simulations.

\subsection{Domain Initialization}

We tested our method in four experiments.
In all experimental subdomains, the movement actions are assumed to succeed with probability $0.9$,
and the $ACT$ action is assumed to succeed always.
In all configurations the horizon $H$ is set to ten steps,
and the factor $\gamma$ is set to $1$, so there is no discounting of future rewards.
We present the initial configuration of the experiment
and the corresponding reward plots in Figures~\ref{exp1},~\ref{exp2}, and~\ref{exp3}.
Letters $R$ indicate robot positions,
and the numbers indicate the amount of tasks at a given position -- for a fixed task placement;
or the probability that a task appears at a given position -- for dynamic task placement.
We provide plots of the results, that contain the mean average reward for each generation,
together with 95\% confidence interval bars.

We chose domain configurations which, due to the location of tasks,
require high level of coordination between agents.
In particular, we created the domains where we expect that 
the policies of the baseline are suboptimal.
For more generic domains, 
the decentralized MCTS with heuristic models is already close to optimal,
and we do not expect much improvement.
In subdomains with fixed positions of tasks we train the agents
for five generations.
In subdomains, where the tasks are assigned dynamically,
we train the agents for three generations, as for higher amount of iterations 
we sometimes observed worsening performance, 
which we attribute to imperfect learning process due to high stochasticity of the domain.

\paragraph{Two robots.} Our first subdomain is a trivial task: a 6x4 map, which has eight tasks to be collected.
Even in such a simple scenario, 
the baseline does not perform well, because both robots
make the assumption that their colleague will serve the task piles of $2$'s and head for the $1$s,
achieving a mean reward of $\approx 5.5$ (0th generation).
The exploration parameter $C$ is set to $0.5$, and the number of simulations at each generation $nSim$
to 320.
Already in the first generation, agent $2$ learns the policy of agent $1$ and adapts accordingly,
which results in an increase of the mean collected reward to $\approx 7.9$.
The average collected reward stabilizes through the next generations,
which suggests that our method reached a Nash equilibrium 
(and in fact a global optimum, given that the maximal reward that could have been obtained 
in each episode is $8$).

\begin{figure}[t]
  \centering
  \includegraphics[width=37mm]{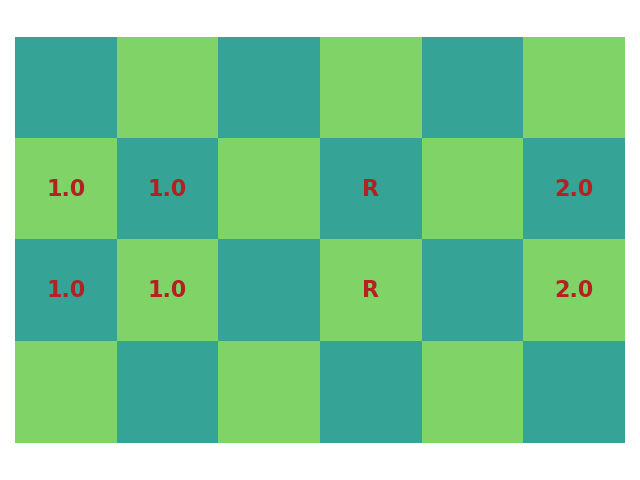}\qquad
  \includegraphics[width=41mm]{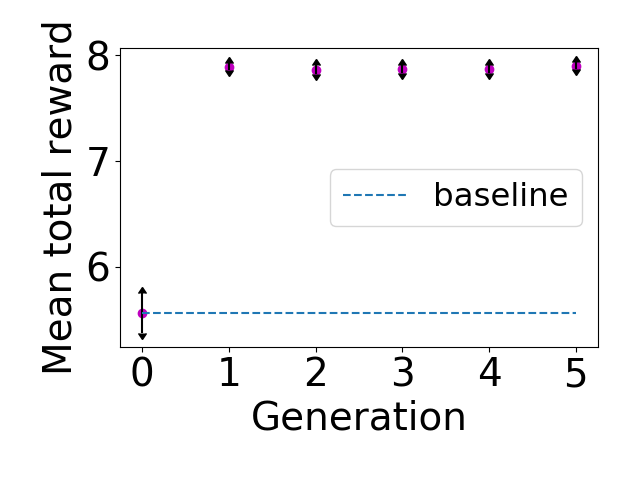}
\caption{Left: The two robots experiment map. Right: Mean rewards of the experiment with two robots. Each generation represents 320 simulations.}
\label{exp1}
\end{figure}

\paragraph{Four robots, fixed tasks.} Our second subdomain is a 7x7 map which has 22 tasks to be collected by four robots.
The exploration parameter $C$ is increased to $1.0$ -- to account for higher possible rewards, 
and the number of simulations at each generation $nSim$ is decreased to $180$ 
-- to account for longer simulation times.
All robots start from the middle of the map.
The baseline method again underpeforms,
as the robots are incentivized to go for the task piles of 3's and 4's,  
instead of spreading in all four directions.
After the application of the ABC algorithm the robots learn the directions of their teammates, spread, 
and a near optimal learning performance is achieved, see Figure~\ref{exp2}.

\begin{figure}[t]
  \centering
  \includegraphics[width=41mm]{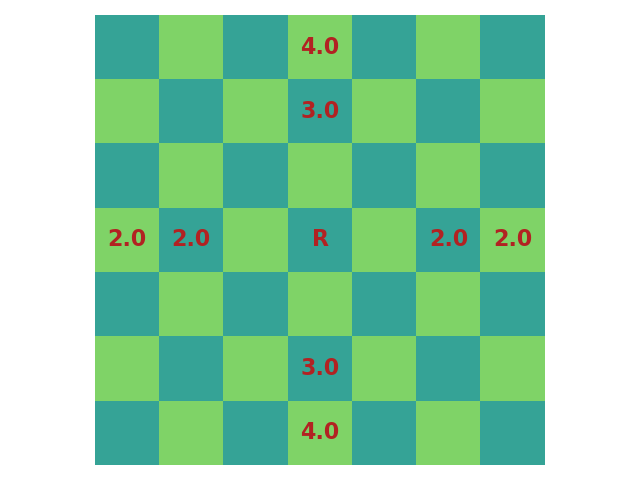}\quad \includegraphics[width=41mm]{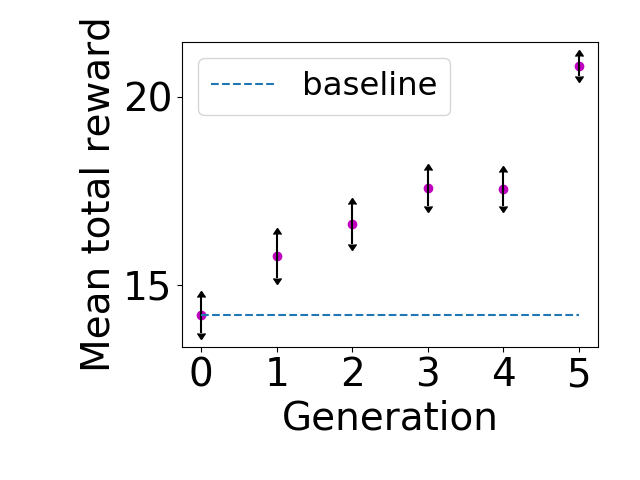}
\caption{Left: The four robots experiment map with fixed task allocation. All robots start in the middle.
Right: Mean rewards of the four robots experiment with fixed task positions.
  Each generation represents 180 simulations.}
\label{exp2}
\end{figure}

\begin{figure}[t]
  \centering
  \includegraphics[width=50mm]{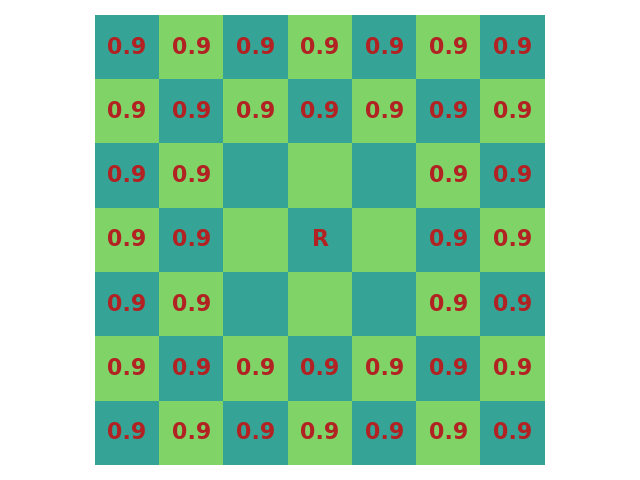}

  \vspace{0.7cm}

  \includegraphics[width=4.1cm]{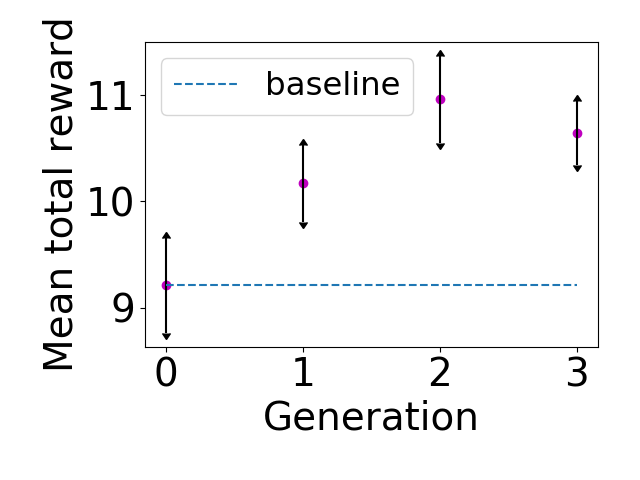}\quad  \includegraphics[width=4.1cm]{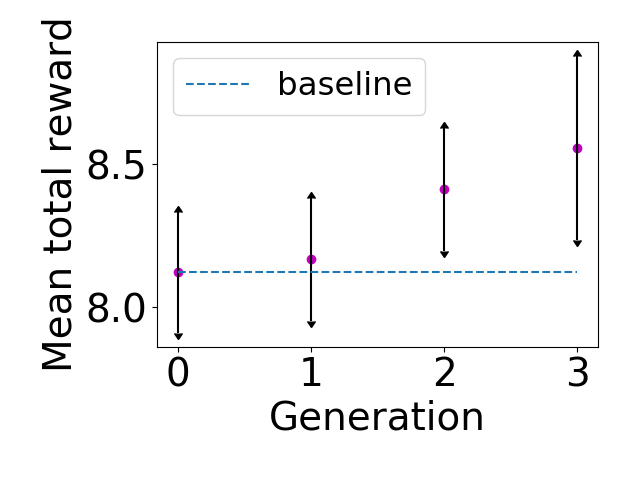}
\caption{Top: The four robots experiment map with dynamic task allocation. All the robots start in the middle.
Bottom: Mean rewards of the dynamic task assignment experiment with four robots and two (left) / three (right) tasks appearing at each
  time step -- data from 90/180 simulations respectively.}
\label{exp3}
\end{figure}

\paragraph{Four robots, dynamic tasks.} For the final two experiments we chose the same 7x7 map as previously,
but this time tasks appear dynamically: two or three new tasks are added randomly with probability 0.9
at each time step during the program execution in one of the marked places.
All the other experiment parameters remain unchanged.
The confidence intervals are wider,
due to additional randomness.
Nevertheless, for the first three generations we observed an improvement
over the 0th generation,
which we attribute to the fact that the agents have learnt that they should spread to cover the task
allocation region,
similarly as in the experiment with fixed task location.

\section{Conclusions}

We have proposed a machine-learning-fueled method of improving teams of MCTS agents.
Our method is grounded in the theory of alternating maximization and, given sufficiently rich
training data and suitable planning time, it is guaranteed to improve the initial joint policies
and reach a local Nash equilibrium.
We have demonstrated in experiments, that the method allows to improve team policies for 
spatial task allocation domains, where coordination is crucial to achieve optimal results.

An interesting direction of future work is to search for the global optimum
of the adaptation process, rather than a local Nash equilibrium.
To that end, one can randomize the orded in which agents are adapting,
find multiple Nash equilibria, and select the one with highest performance.
Another research avenue is to extend the ABC method to environments with partial information
(Dec-POMDPs), where the agents
need to reason over the information set available to their teammates.

\section*{Acknowledgments}
This project received funding from 
 EPSRC First Grant EP/R001227/1, and 
 the European Research Council (ERC) 
under the European Union's
Horizon 2020 research 
and innovation programme (grant agreement No.~758824 \textemdash INFLUENCE).
 \begin{figure}[h!]
   \centering
\includegraphics[width=0.4\columnwidth]{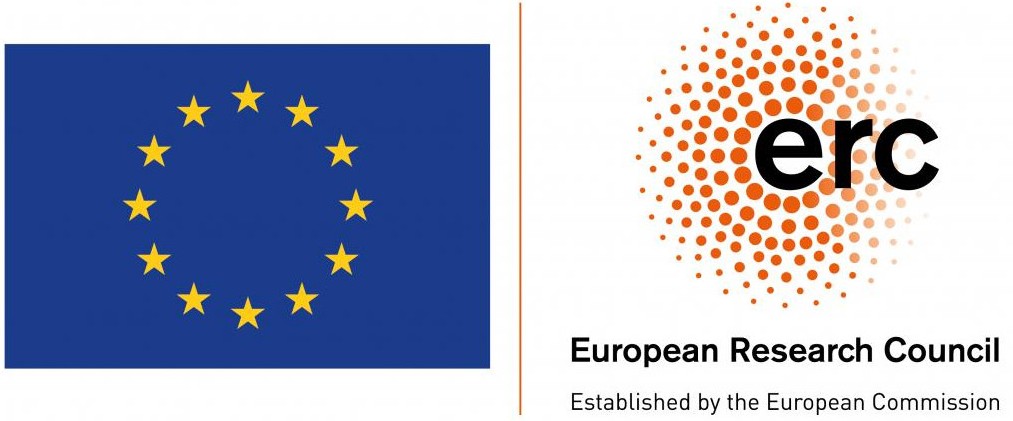}
\end{figure}

\bibliographystyle{named}
\bibliography{ijcai20}

\end{document}